\documentclass{article}

\usepackage[leqno]{amsmath}
\usepackage{amsthm}
\usepackage{amssymb}
\newtheorem{definition}{Definition}
\newtheorem{proposition}{Proposition}

\newcommand{\R}{\mathbb{R}}
\newcommand{\dn}[1]{\left\downarrow #1\right.}
\newcommand{\dnv}[1]{\left\downarrow_\vdash #1\right.}

\title{Algebras over a field and semantics for context based reasoning}

\author{Daoud Clarke}

\date{\today}

\begin{document}
\maketitle

\section{Introduction}

This chapter introduces context algebras and demonstrates their
application to combining logical and vector-based representations of
meaning. Other chapters in this volume consider approaches that
attempt to reproduce aspects of logical semantics within new
frameworks. The approach we present here is different: We show how
logical semantics can be embedded within a vector space framework, and
use this to combine distributional semantics, in which the meanings of
words are represented as vectors, with logical semantics, in which the
meaning of a sentence is represented as a logical form.

The ideas discussed here are present (at least implicitly) in earlier
work, however we have introduced some notions which allow the
mathematics to be tidied considerably:
\begin{itemize}
\item When context algebras were introduced \cite{Clarke:07} they were
  applied only to functions from a free monoid $A^*$ to $\R$. In fact,
  this construction generalises to functions from $A^*$ to an
  arbitrary vector space $V$. The proof of the general case is
  identical to the specific one, and is reproduced here unchanged.
\item This more general construction gives us an elegant way of
  embedding logical semantics within an algebraic framework. The
  embedding presented here follows similar lines to the thinking of
  \cite{Clarke:07}, but uses the new, more general, context algebras.
\item The method of combining logical semantics with vector-based
  lexical semantics is new, but follows similar lines to an approach
  suggested in \cite{Clarke:11}.
\end{itemize}


\subsection{Motivation}

Like other work in this book, we are concerned with the question of
how to compose vector-based representations of meaning so that phrases
and sentences are also represented as vectors. We wish to preserve the
wonderful flexibility and fine-grained distinctions of meaning that
vector spaces allow, and which have been so successful in lexical
semantics, to build a complete framework for natural language
semantics encompassing words, phrases, sentences and beyond.

Unlike other work, in the approach presented here, we do not attempt
to reconstruct logical semantics from scratch, instead embedding
logical representations within a vector space. This has some
benefits:
\begin{itemize}
\item Doing natural language semantics well is difficult, and a lot of
  work has gone into getting logical semantics for natural language
  right. It includes worrying about things like anaphora resolution,
  generalised quantifiers and negation, and reproducing this work from
  scratch in a vector-based framework is a mammoth task. Our approach
  allows us to reuse existing work while incorporating vector-based
  lexical semantics.
\item There is the potential to reuse existing tools for natural
  language semantics, although computation in general is a problem
  with our approach.
\end{itemize}
The downside to our approach is that we don't yet have an efficient
way of computing with it, although we have ideas for how this may be
achieved. Another potential criticism of this approach is that the
flexibility in how vector representations are combined with logic may
be hindered by requiring the wholesale adoption of existing
formalisms, rather than the more tailored approaches of other work.



\section{Theory of Meaning}
\label{theory}

We first recall some basic definitions:
\begin{definition}[Algebra over a field]
  An algebra over a field is a vector space $\mathcal{A}$ over a field
  $K$ together with a binary operation $(a,b)\mapsto ab$ on
  $\mathcal{A}$ that is bilinear,
\begin{align}
  a(\alpha b + \beta c) &= \alpha ab + \beta ac\\
  (\alpha a+\beta b)c &= \alpha ac + \beta bc
\end{align}
for all $a,b,c\in \mathcal{A}$ and all $\alpha,\beta \in K$. If we
additionally have the property $(ab)c = a(bc)$ then $\mathcal{A}$ is
called \textbf{associative}.  An algebra is called \textbf{unital} if
it has a distinguished \textbf{unity} element $1$ satisfying $1x = x1
= x$ for all $x\in\mathcal{A}$. We are generally only interested in
\textbf{real} associative algebras, where $K$ is the field of real
numbers, $\mathbb{R}$.
\end{definition}

Examples of associative algebras are given by square matrices of order
$n$ under normal matrix multiplication and entry-wise vector
operations. The field of the algebra is the field of the elements of
the matrices; so real valued matrices form a real associative
algebra.

\subsection{Meaning as Context}

The distributional hypothesis of Harris \cite{Harris:68} states that
words will have similar meanings if and only if they occur in similar
contexts. We formalise this idea, and examine the resultant
mathematical properties.

Let $A$ be some set, which we imagine to be the set of words of a
natural language. If $V$ is a vector space, we define a
\textbf{general language for $V$} (or simply a \textbf{language} when
there is no ambiguity) as a function from the free monoid $A^*$ to
$V$. For each string $x \in A^*$, we have associated with it a vector
in $V$ that may have several interpretations:
\begin{itemize}
\item $V$ may simply be the real numbers $\mathbb{R}$, and the
  language may describe a probability distribution over strings in
  $A^*$, in which case we can view the language as a generative model
  of a natural language, describing the probability of observing each
  possible string as a sentence or a document.
\item $V$ may be a vector space describing the meaning of strings, for
  example a representation of model-theoretic semantics. In this case,
  the language attaches a meaning to each possible string in $A^*$.
\end{itemize}

Given a general language $L$ we define the context vector $\hat{x}$ of
a string $x$ as a function from $A^*\times A^*$ to $V$:
$$\hat{x}(y,z) = L(yxz)$$
Thus, as in the study of formal languages, we consider the context of
a string to be the pair of strings surrounding it. We think of
$\hat{x}$ as an element of the vector space $V^{A^*\times A^*}$, the
space of functions from $A^*\times A^*$ to $V$. This is a vector space
with operations defined point-wise, i.e.~if $f,g \in V^{A^*\times
  A^*}$ and $\alpha \in K$ where $K$ is the field of $V$ then $(\alpha
f)(x,y) = \alpha f(x,y)$ and $(f+g)(x,y) = f(x,y) + g(x,y)$ for all
$x,y \in A^*$.

\begin{definition}[Generated Subspace $\mathcal{A}$]
  The subspace $\mathcal{A}$ of $V^{A^*\times A^*}$ is the set defined
  by
\begin{equation}\mathcal{A} = \{a : a = \sum_{x\in A^*}\alpha_x \hat{x}\text{ for some }\alpha_x \in \R\}\end{equation}
In other words, it is the space of all vectors formed from linear combinations
of context vectors.
\end{definition}

Given this definition, we can define multiplication on $\mathcal{A}$,
by assuming linearity, and making the multiplication compatible with
the underlying multiplication of $A^*$. That is, we want to define a
product $\cdot$ on $\mathcal{A}$ such that $\hat{x}\cdot \hat{y} =
\widehat{xy}$ for all $x,y \in A^*$. However, in general there is more
than one basis for $\mathcal{A}$ formed from elements $\hat{x}$, for
$x\in A^*$. We need to confirm that multiplication will be the same,
regardless of which basis we choose.

\begin{proposition}[Context Algebra]
Multiplication on $\mathcal{A}$ is the same irrespective of the choice of basis $B$.
\end{proposition}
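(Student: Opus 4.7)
The plan is to show that the bilinear extension of $\hat{x}\cdot\hat{y} = \widehat{xy}$ from any basis $B\subseteq\{\hat{x}:x\in A^*\}$ of $\mathcal{A}$ yields the same product on $\mathcal{A}\times\mathcal{A}$. I would first reformulate the claim: fix $a,b\in\mathcal{A}$, and suppose $a = \sum_i\alpha_i\hat{x_i} = \sum_i\alpha'_i\hat{x'_i}$ and $b = \sum_j\beta_j\hat{y_j} = \sum_j\beta'_j\hat{y'_j}$ as expansions in two different bases $B$ and $B'$. By bilinearity, basis-independence of the product reduces to the implication that whenever $\sum_i\gamma_i\hat{x_i} = 0$ in $\mathcal{A}$, one has $\sum_i\gamma_i\widehat{x_iy} = 0$ for every $y\in A^*$, and symmetrically $\sum_j\delta_j\widehat{xy_j} = 0$ whenever $\sum_j\delta_j\hat{y_j} = 0$.

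The key observation, and the entire content of the argument, is the identity
\[
\widehat{xy}(u,v) = L(uxyv) = \hat{x}(u,yv) = \hat{y}(ux,v)
\]
for all $u,v\in A^*$, which follows from associativity of concatenation in $A^*$. Hence, if $\sum_i\gamma_i\hat{x_i}$ is the zero function on $A^*\times A^*$, then evaluating this sum at the point $(u,yv)$ gives $\sum_i\gamma_i\hat{x_i}(u,yv) = 0$; but this is just $\sum_i\gamma_i\widehat{x_iy}(u,v)$. Since $(u,v)$ is arbitrary, $\sum_i\gamma_i\widehat{x_iy} = 0$ in $\mathcal{A}$. The right-multiplication case is symmetric, using $\widehat{yx_i}(u,v) = \hat{x_i}(uy,v)$.

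Putting these pieces together, the product $a\cdot b = \sum_{i,j}\alpha_i\beta_j\widehat{x_iy_j}$ is unchanged if either $a$ or $b$ is re-expanded in a different basis of context vectors, so the multiplication is well-defined on $\mathcal{A}$.

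The main obstacle is not really conceptual but notational: one must carefully handle the formal linear combinations (with finitely many nonzero coefficients) indexed by $A^*$, and cleanly separate the roles of the two bases. Once the identity $\widehat{xy}(u,v)=\hat{x}(u,yv)$ is isolated, everything else is bookkeeping in the ambient pointwise-defined vector space $V^{A^*\times A^*}$.
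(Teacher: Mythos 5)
Your proof is correct and rests on the same key step as the paper's: the identity $\widehat{xy}(u,v) = \hat{x}(u,yv) = \hat{y}(ux,v)$ is exactly the paper's move of specialising the context ($y = u_2y'$ and $x = x'v_i$) in the relation $L(xu_1y) = \sum_i \alpha_i L(xv_iy)$. You package it as a well-definedness argument (linear relations among context vectors are preserved under one-sided concatenation) rather than as a direct comparison of the product in two bases, but the mathematical content is the same and your bookkeeping goes through.
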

\begin{proof}
We say $B \subseteq A^*$ defines a basis $\mathcal{B}$ for $\mathcal{A}$ when $\mathcal{B}$ is a basis such that $\mathcal{B} = \{\hat{x}: x\in B\}$. Assume there are two sets $B_1, B_2 \subseteq A^*$ that define corresponding bases $\mathcal{B}_1$ and $\mathcal{B}_2$ for $\mathcal{A}$. We will show that multiplication in basis $\mathcal{B}_1$ is the same as in the basis $\mathcal{B}_2$.

We represent two basis elements $\hat{u}_1$ and $\hat{u}_2$ of $\mathcal{B}_1$ in terms of basis elements of $\mathcal{B}_2$:
\begin{equation}\hat{u}_1 = \sum_i \alpha_i \hat{v}_i \quad\text{and}\quad
\hat{u}_2 = \sum_j \beta_j \hat{v}_j\end{equation}
for some $u_i \in B_1$, $v_j \in B_2$ and $\alpha_i, \beta_j  \in \R$.
 First consider multiplication in the basis $\mathcal{B}_1$. Note that $\hat{u}_1 = \sum_i \alpha_i \hat{v}_i$ means that $L(xu_1y) = \sum_i \alpha_i L(xv_iy)$ for all $x,y \in A^*$. This includes the special case where $y = u_2y'$ so \begin{equation}L(xu_1u_2y') = \sum_i \alpha_i L(xv_iu_2y')\end{equation} for all $x, y' \in A^*$.
Similarly, we have $L(xu_2y) = \sum_j \beta_j L(xv_jy)$ for all $x,y \in A^*$ which includes the special case $x = x'v_i$, so $L(x'v_iu_2y) = \sum_j \beta_j L(x'v_iv_jy)$ for all $x',y \in A^*$. Inserting this into the above expression yields
\begin{equation}L(xu_1u_2y) = \sum_{i,j} \alpha_i\beta_j L(xv_iv_jy)\end{equation}
for all $x,y \in A^*$ which we can rewrite as
\begin{equation}\hat{u}_1\cdot\hat{u}_2 = \widehat{u_1u_2} = \sum_{i,j}\alpha_i\beta_j (\hat{v}_i\cdot\hat{v}_j)
= \sum_{i,j}\alpha_i\beta_j \widehat{v_iv_j}\end{equation}
Conversely, the product of $u_1$ and $u_2$ using the basis $\mathcal{B}_2$ is
\begin{equation}\hat{u}_1\cdot \hat{u}_2 = \sum_i \alpha_i \hat{v}_i \cdot \sum_j \beta_j \hat{v}_j =  \sum_{i,j}\alpha_i\beta_j (\hat{v}_i\cdot\hat{v}_j)\end{equation}
thus showing that multiplication is defined independently of what we choose as the basis.
\end{proof}

Multiplication as defined above makes $\mathcal{A}$ an algebra,
moreover it is easy to see that it is associative since the
multiplication on $A^*$ is associative. It has a unity, which is given
by $\hat{\epsilon}$, where $\epsilon$ is the empty string.

\subsection{Entailment}

Our notion of entailment is founded on the idea of
\textbf{distributional generality} \cite{Weeds:04}. This is the idea
that the distribution over contexts has implications not only for
similariy of meaning, but can also describe how general a meaning
is. A term $t_1$ is considered distributionally more general than
another term $t_2$ if $t_1$ occurs in a wider range of contexts than
$t_2$. It is proposed that distributional generality may be connected
to semantic generality. For example, we may expect the
term \emph{animal} to occur in a winder range of contexts than the
term \emph{cat} since the first is semantically more general.

We translate this to a mathematical definition by making use of an
implicit partial ordering on the vector space:

\begin{definition}[Partially ordered vector space]\index{vector space!partially ordered|textbf}
A partially ordered vector space $V$ is a real vector space together with a partial ordering $\le$ such that:
\vspace{0.1cm}\\
\indent if $x \le y$ then $x + z \le y + z$\\
\indent if $x \le y$ then $\alpha x \le \alpha y$
\vspace{0.1cm}\\
for all $x,y,z \in V$, and for all $\alpha \ge 0$. Such a partial ordering is called a \textbf{vector space order} on $V$. An element $u$ of $V$ satisfying $u \ge 0$ is called a \textbf{positive element}; the set of all positive elements of $V$ is denoted $V^+$. If $\le$ defines a lattice on $V$ then the space is called a \textbf{vector lattice} or \textbf{Riesz space}.
\end{definition}

If $V$ is a vector lattice, then the vector space of contexts,
$V^{A^*\times A^*}$, is a vector lattice, where the lattice operations
are defined component-wise: $(u\land v)(x,y) = u(x,y)\land v(x,y)$,
and $(u\lor v)(x,y) = u(x,y)\lor v(x,y)$. For example, $\R$ is a
vector lattice with meet as the $\min$ operation and join as $\max$,
so $\R^{A^*\times A^*}$ is also a vector lattice. In this case, where
the value attached to a context is an indication of its frequency of
occurrence, $\hat{x} \le \hat{y}$ means that $y$ occurs at least as
frequently as $x$ in every context.

Note that, unlike the vector operations, the lattice operations are
dependent on the basis: a different basis gives different
operations. This makes sense in the linguistic setting, since there is
nearly always a distinguished basis, originating in the contexts from
which the vector space is formed.

\section{From logical forms to algebra}
\label{applications}

Model-theoretic approaches generally deal with a subset of all
possible strings, the language under consideration, translating
sequences in the language to a logical form, expressed in another,
logical language. Relationships between logical forms are expressed by
an entailment relation on this logical language.

This section is about the algebraic representation of logical
languages. Representing logical languages in terms of an algebra will
allow us to incorporate statistical information about language into
the representations. For example, if we have multiple parses for a
sentence, each with a certain probability, we will be able to
represent the meaning of the sentence as a probabilistic sum of the
representations of its individual parses.

By a logical language we mean a language $\Lambda
\subseteq A'^*$ for some alphabet $A'$, together with a relation
$\vdash$ on $\Lambda$ that is reflexive and transitive; this relation
is interpreted as entailment on the logical language. We will show how
each element $u \in \Lambda$ can be associated with a projection on a
vector space; it is these projections that define the algebra. Later
we will show how this can be related to strings in the natural
language $\lambda$ that we are interested in.

For a subset $T$ of a set $S$, we define the projection $P_T$ on
$L^\infty(S)$ (the set of all bounded real-valued functions on $S$) by
$$P_T e_s = \left\{
\begin{array}{ll}
e_s & \text{if } s \in T\\
0 & \text{otherwise}
\end{array}
\right.$$ 
Where $e_s$ is the basis element of $L^\infty(S)$
corresponding to the element $s \in S$. Given $u \in \Lambda$ , define
$\left\downarrow_\vdash(u)\right. = \{v : v \vdash u\}$.

As a shorthand we write $P_u$ for the projection $P_{\left\downarrow_\vdash(u)\right.}$ on the space
$L^\infty(\Lambda)$.  The projection $P_u$ can be thought of as projecting onto the
space of logical statements that entail u. This is made formal in the
following proposition:

\begin{proposition}
\label{projections}
$P_u \le P_v$ if and only if $u \vdash v$.
\end{proposition}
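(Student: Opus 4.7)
The plan is to reduce the operator inequality $P_u \le P_v$ to a statement about the sets $\dnv{u}$ and $\dnv{v}$, and then close the equivalence using only reflexivity and transitivity of $\vdash$. First I would observe that $P_u$ and $P_v$ are diagonal in the distinguished basis $\{e_s\}_{s \in \Lambda}$: each basis vector $e_s$ is either fixed (when $s \vdash u$) or sent to $0$. Applying both projections to an arbitrary positive $f \in L^\infty(\Lambda)$ and comparing pointwise collapses $P_u \le P_v$ to the condition that for every $s \in \Lambda$, $s \in \dnv{u}$ implies $s \in \dnv{v}$; that is, $\dnv{u} \subseteq \dnv{v}$.

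With this reduction, each direction of the biconditional is immediate. For the forward direction, assume $u \vdash v$. Then for any $w \in \dnv{u}$ we have $w \vdash u$ and $u \vdash v$, so transitivity gives $w \vdash v$, hence $w \in \dnv{v}$; thus $\dnv{u} \subseteq \dnv{v}$ and so $P_u \le P_v$. For the reverse direction, assume $P_u \le P_v$, i.e.\ $\dnv{u} \subseteq \dnv{v}$. Reflexivity gives $u \vdash u$, so $u \in \dnv{u} \subseteq \dnv{v}$, which says exactly $u \vdash v$.

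The only real obstacle is fixing the meaning of $\le$ on projections, since the paper does not spell it out explicitly. I would briefly justify that whether one reads it as the standard operator order (range containment, equivalent to $P_u P_v = P_v P_u = P_u$) or as the pointwise order inherited from the vector-lattice structure on $L^\infty(\Lambda)$, both interpretations reduce to the set inclusion $\dnv{u} \subseteq \dnv{v}$, because the projections are defined coordinate-wise on the canonical basis. Once that is settled, no machinery beyond the assumed preorder axioms on $\vdash$ is needed, and the proof is essentially two lines.
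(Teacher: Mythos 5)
Your proof is correct and follows essentially the same route as the paper: the paper likewise uses the characterisation $P_u \le P_v \iff P_uP_v = P_vP_u = P_u$, computes $P_uP_ve_w$ coordinate-wise (which is exactly your reduction to $\dnv{(u)} \subseteq \dnv{(v)}$), and then closes the two directions with transitivity and with reflexivity applied at $w = u$. Your extra remark that the pointwise lattice order gives the same reduction is a harmless addition but not a different argument.
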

\begin{proof}
  Recall that the partial ordering on projections is defined by $P_u
  \le P_v$ if and only if $P_uP_v = P_vP_u = P_u$
  \cite{Aliprantis:85}. Clearly
$$P_uP_v e_w = \left\{
\begin{array}{ll}
e_w & \text{if } w \vdash u \text{ and } w \vdash v\\
0 & \text{otherwise}
\end{array}
\right.$$
so if $u\vdash v$ then since $\vdash$ is transitive, if $w\vdash u$
then $w\vdash v$ so we must have $P_uP_v = P_vP_u = P_u$.

Conversely, if $P_u P_v = P_u$ then it must be the case that $w \vdash u$ implies $w \vdash v$ for
all $w \in \Lambda$, including $w = u$. Since $\vdash$ is reflexive, we have $u \vdash u$, so $u \vdash v$ which
completes the proof.
\end{proof}

To help us understand this representation better, we will show that it
is closely connected to the ideal completion of partial orders. Define
a relation $\equiv$ on $\Lambda$ by $u \equiv v$ if and only if $u
\vdash v$ and $v \vdash u$. Clearly $\equiv$ is an equivalence
relation; we denote the equivalence class of $u$ by $[u]$. Equivalence
classes are then partially ordered by $[u] \le [v]$ if and only if $u
\vdash v$. Then note that $\bigcup\dnv{([u])} = \dnv{(u)}$, thus $P_u$
projects onto the space generated by the basis vectors corresponding
to the elements $\bigcup\dnv{([u])}$ , the ideal completion
representation of the partially ordered equivalence classes.

What we have shown here is that logical forms can be viewed as
projections on a vector space. Since projections are operators on a
vector space, they are themselves vectors; viewing logical
representations in this way allows us to treat them as vectors, and we
have all the flexibility that comes with vector spaces: we can add
them, subtract them and multiply them by scalars; since the vector
space is also a vector lattice, we also have the lattice operations of
meet and join. As we will see in the next section, in some special
cases such as that of the propositional calculus, the lattice meet and
join coincide with logical conjunction and disjunction.

\subsection{Example: Propositional Calculus}

In this section we apply the ideas of the previous section to an
important special case: that of the propositional calculus. We choose
as our logical language $\Lambda$ the language of a propositional
calculus with the usual connectives $\lor$, $\land$ and $\neg$, the logical constants
$\top$ and $\bot$ representing ``true'' and ``false'' respectively,
with $u \vdash v$ meaning ``infer $v$ from $u$'', behaving in the usual
way. Then:
\begin{eqnarray*}
P_{u\land v} &=& P_u P_v\\
P_{\neg u} &=& 1 - P_u + P_\bot\\
P_{u\lor v} &=& P_u + P_v - P_u P_v\\
P_\top &=& 1
\end{eqnarray*}
To see this, note that the equivalence classes of $\vdash$ form a Boolean algebra under
the partial ordering induced by $\vdash$, with
$$[u \land v] = [u] \land [v]$$
$$[u \lor v] = [u] \lor [v]$$
$$[\neg u] = \neg[u].$$
Note that while the symbols $\land$, $\lor$ and $\neg$ refer to
logical operations on the left hand side, on the right hand side they
are the operations of the Boolean algebra of equivalence classes; they
are completely determined by the partial ordering associated with
$\vdash$.\footnote{In the context of model theory, the Boolean algebra
  of equivalence classes of sentences of some theory T is called the
  Lindenbaum-Tarski algebra of T \cite{Hinman:05}.  }

Since the partial ordering carries over to the ideal completion we
must have
\begin{eqnarray*}
\dn{[u \land v]} &=& \dn{[u]} \cap \dn{[v]}\\
\dn{[u \lor v]} &=& \dn{[u]} \cup \dn{[v]}
\end{eqnarray*}
Since $u \vdash \top$ for all $u \in \Lambda$, it must be the case
that $\dn{[\top]}$ contains all sets in the ideal completion. However the
Boolean algebra of subsets in the ideal completion is larger than the
Boolean algebra of equivalence classes; the latter is embedded as a
Boolean sub-algebra of the former. Specifically, the least element in
the completion is the empty set, whereas the least element in the
equivalence class is represented as $\dn{[\bot]}$ . Thus negation carries
over with respect to this least element:
$$[\neg u] = ([\top] - [u]) \cup [\bot].$$

We are now in a position to prove the original statements:
\begin{itemize}
\item Since $\dn{[\top]}$ contains all sets in the completion,
  $\bigcup \dn{[\top]} = \left\downarrow_\vdash (\top)\right. =
  \Lambda$, and $P_\top$ must project onto the whole space, that is
  $P_\top = 1$.
\item Using the above expression for $\dn{[u \land v]}$, taking unions of the disjoint sets in
the equivalence classes we have $\left\downarrow_\vdash (u \land v)\right. = \dnv{(u)} \cap \dnv{(v)}$. Making use of
the equation in the proof to Proposition \ref{projections}, we have
$P_{u\land v} = P_u P_v$ .
\item In the above expression for $\dn{[\neg u]}$, note that
  $\dn{[\top]} \subseteq \dn{[u]} \subseteq \dn{[\bot]}$ . This allows
  us to write, after taking unions and converting to projections,
  $P_{\neg u} = 1 - P_u + P_\bot$ , since $P_\top = 1$.
\item Finally, we know that $u \lor v \equiv \neg(\neg u \land \neg v)$, and since equivalent elements in $\Lambda$
have the same projections we have
\begin{eqnarray*}
P_{u\lor v} &=& 1 - P_{\neg u\land\neg v} + P_\bot\\
&=& 1 - P_{\neg u} P_{\neg v} + P_\bot\\
&=& 1 - (1 - P_u + P_\bot )(1 - P_v + P_\bot ) + P_\bot\\
&=& P_u + P_v - P_u P_v - 2P_\bot + P_\bot P_u + P_\bot P_v\\
&=& P_u + P_v - P_u P_v
\end{eqnarray*}
\end{itemize}
It is also worth noting that in terms of the vector lattice operations $\lor$ and $\land$ on
the space of operators on $L^\infty(\Lambda)$, we have $P_{u\lor v} =
P_u \lor P_v$ and $P_u\land v = P_u \land P_v$.

\subsection{From logic to context algebras}



In the simplest case, we may be able to assign to each natural
language sentence a single sentence in the logical language (its
interpretation). Let $\Gamma \subseteq A^*$ be a formal language
consisting of natural language sentences $x$ with a corresponding
interpretation in the logical language $\Lambda$, which we denote
$\rho(x)$. The function $\rho$ maps natural language sentences to
their interpretations, and may incorporate tasks such as word-sense
disambiguation, anaphora resolution and semantic disambiguation.

We can now define a general language to represent this situation. We
take as our vector space $V$ the space generated by projections $\{P_u
: u \in \Lambda\}$ on $L^\infty(\Lambda)$. For $x\in A^*$ we define
$$L(x) = \left\{
\begin{array}{ll}
P_{\rho(x)} & \text{if } x \in \Gamma\\
0 & \text{otherwise}
\end{array}
\right.$$ 
Given the discussions in the preceding sections, it is clear that for
$x,y\in \Gamma$, $L(x) \le L(y)$ if and only if $\rho(x) \vdash
\rho(y)$, so the partial ordering of the vector space encodes the
entailment relation of the logical language.

The context algebra constructed from $L$ gives meaning to any
substring of elements of $\Gamma$, so any natural language expression
which is a substring of a sentence with a logical interpretation has a
corresponding non-zero element in the algebra. If $\Gamma$ has the
property that no element of $\Gamma$ is a substring of any other
element of $\Gamma$ (for example, $\Gamma$ consists of natural
language sentences starting and ending with a unique symbol), then we
will also have $\hat{x} \le \hat{y}$ if and only if $\rho(x) \vdash
\rho(y)$, for all $x,y \in \Gamma$. In this case, the only context
which maps to a non-zero vector is the pair of empty strings,
$(\epsilon, \epsilon)$.

\subsection{Incorporating Word Vectors}

The construction of the preceding section is not very useful on its
own, as it merely encodes the logical reasoning within a vector space
framework. However, we will show how this construction can be used to
incorporate vector-based representations of lexical semantics.

In the general case, we assume that associated with each word $a \in
A$ there is a vector $\psi(a)$ in some finite-dimensional vector space
$L^\infty(S)$ that represents its lexical semantics, perhaps obtained
using latent semantic analysis or some other technique. $S$ is a
finite set indexing the lexical semantic vector space which we
interpret as containing \textbf{aspects} of meaning. This set may be
partitioned into subsets containing aspects for different parts of
speech, for example, we may wish that the vector for a verb is always
disjoint to the vector for a noun. Similarly, there may be a single
aspect for each closed class or function word in the natural language,
to allow these words not to have a vector nature.

Instead of mapping directly from strings of the natural language to
the logical language, we map from strings of aspects of meaning. Let
$\Delta \subset S^*$ be a formal language consisting of all meaningful
strings of aspects, i.e.~those with a logical interpretation. As
before, we assume a function $\rho$ from $\Delta$ to a logical
language $\Lambda$. The corresponding context algebra $\mathcal{A}$
describes composition of aspects of meaning.

We can now describe the representation of the meaning $\tilde{a}$ of a
word $a\in A$ in terms of elements of $\mathcal{A}$:
$$\tilde{a} = \sum_{s \in S} \psi(a)_s\hat{s} $$
thus a term is represented as a weighted sum of the context vectors
for its aspects. Composition of $\mathcal{A}$ together with the
distributivity of the algebra is then enough to define vectors for any
string in $A^*$. For $x \in A^*$, we define $\tilde{x}$ by
$$\tilde{x} = \tilde{x}_1 \cdot \tilde{x}_2 \cdots \tilde{x}_n$$
where $x = x_1x_2\cdots x_n$ for $x_n \in A$.

This construction achieves the goal of combining vector space
representations of lexical semantics with existing logical
formalisms. It has the following properties:
\begin{itemize}
\item The entailment relation between the logical expressions
  associated with sentences is encoded in the partial ordering of the
  vector space.
\item The vector space representation of a sentence includes a sum
  over all possible logical sentences, where each word has been
  represented by one of its aspects.
\end{itemize}

\subsubsection*{Discussion}

This second property is actually inconsistent with the idea that
distributional generality determines semantic generality. To see why,
consider the sentences
\begin{enumerate}
\item \emph{No animal likes cheese}
\item \emph{No cat likes cheese}
\end{enumerate}
Although the first sentence entails the second, the term \emph{animal}
is more general than \emph{cat}; the quantifier \emph{no} has reversed
the direction of entailment. The distributivity of the algebra means
that if the term \emph{animal} occurs in a wider range of contexts
than \emph{cat} (i.e.~it is distributionally more general), then this
generality must persist through to the sentence level as terms are
multiplied.

There are two ways of viewing this inconsistency:
\begin{enumerate}
\item The construction is incorrect because the distributional
  hypothesis does not hold universally at the lexical level. In fact,
  this idea is justified by our example above, in which the more
  general term, \emph{animal}, would be expected to occur in a smaller
  range of contexts than \emph{cat} when preceded by the quantifier
  \emph{no}. Under this view, the construction needs to be altered so
  that quantifiers such as \emph{no} can reverse the direction of
  entailment. In fact, for this to be the case, we would need to
  dispense with using an algebra altogether, since distributivity is a
  fundamental property of the algebra.
\item The construction is correct as long as the ``aspects'' of words
  are really their senses, so the vector space nature only represents
  semantic ambiguity and not semantic generality. In reality, aspects
  may be more subtle than what is normally considered a word sense,
  and the vector space representation would capture this subtlety. In
  this view, distributional generality is correlated with semantic
  generality at the sentence level, but not necessarily the word
  level. Vector representations describe only semantic ambiguity and
  not generality, which should be incorporated into the associated
  logical representation. This would mean that the algorithms used for
  obtaining vectors for terms would have to have more in common with
  automatic word sense induction than the more general semantic
  induction associated with the distributional hypothesis.
\end{enumerate}

An alternative solution for this type of quantifier (another example
is \emph{all}) and negation in general, is to use a construction
similar to that proposed in \cite{Preller:09}, which uses Bell states
to swap dimensions in a similar manner to qubit operators.

\subsection{Partial Entailment}

In general, it is unlikely that any two strings $x,y \in A^*$ which
humans would judge as entailing would have vectors such that
$\tilde{x} \le \tilde{y}$ because of the nature of the automatically
obtained vectors. Instead, it makes sense to consider a \emph{degree}
of entailment between strings. In this case, we assume we have a
linear functional $\phi$ on $V^{A^*\times A^*}$. The degree to which
$x$ entails $y$ is then given defined as
$$\frac{\phi(\tilde{x} \land \tilde{y})}{\phi(\tilde{x})}$$
This has many of the properties we would expect from a degree of
entailment. In certain cases $\phi$ can be used to make the space an
Abstract Lebesgue space \cite{Abramovich:02}, in which case we can
interpret the above definition as a conditional probability. This
idea, and methods of defining $\phi$ are discussed in detail in
\cite{Clarke:07}.

For the vector space $V$ generated by projections on
$L^\infty(\Lambda)$, one possible definition of $\phi$ would be given
by
$$\phi(u) = \sum_{l\in\Lambda} P(l)\|u(\epsilon,\epsilon)e_l\|$$
where $P(l)$ is some probability distribution over elements of
$\Lambda$.  The interpretation of this is: all contexts except those
consisting of a pair of empty strings are ignored (so only strings
that have logical interpretations make a contribution to the value of
the linear functional); the value is then given by summing over all
strings of the logical language and multiplying the probability of
each string $l$ by the size of the vector resulting from the action of the
operator on the basis element corresponding to $l$. The probability
distribution over $\Lambda$ needs to be estimated, this could perhaps
be done using machine learning:
\begin{itemize}
\item Given a corpus, build a model for each sentence in the corpus
  and its negation using a model builder
\item Train a support vector machine using the models for the sentence
  and its negation as the two classes. This requires defining a kernel
  on models.
\item Given a new string $l \in \Lambda$, build a model, then use the
  probability estimate of the support vector machine for the model
  belonging to the positive class, together with a normalisation
  function.
\end{itemize}

\subsection{Computational Issues}

In general it will be very hard to compute the degree of entailment
between two strings using the preceding definitions. The number of
logical interpretations that need to be considered increases
exponentially with the length of the string. One possible way of
tackling this would be to use Monte-Carlo techniques, for example,
sampling dimensions of the vector space and computing degrees of
entailment for the sample.

A more principled approach would be to exploit symmetries in the
mapping $\rho$ from strings of aspects to the logical language. A
further possibility is that a deeper analysis of the algebraic
properties of context algebras leads to a simpler method of
computation. Further work is undoubtedly necessary to tackle this
problem.

\section{Conclusion}
\label{conclusion}

We have introduced a more general definition of context algebras, and
shown how they can be used to combine vector-based lexical semantics
with logic based semantics at the sentence level. Whilst computational
issues remain to be resolved, our approach allows the reuse of the
abundance of work in logic-based natural language semantics.

\bibliographystyle{plain}
\bibliography{contexts}

\end{document}